\newcommand{\papertype}{neurodata}
\ifnum\pdfstrcmp{\papertype}{neurodata}=0
\newtheorem{theorem}{Theorem}[section]
\newtheorem{lemma}{Lemma}[section]
\newenvironment{corollary}[1][]
{
\if\relax\detokenize{#1}\relax
\else
\ifcsname #1-used\endcsname
  \expandafter\xdef\csname #1-used\endcsname{\the\numexpr\csname #1-used\endcsname+1}%
\else
  \expandafter\gdef\csname #1-used\endcsname{1}%
\fi
\renewcommand{\thecorollaryinner}{\ref{#1}.\csname #1-used\endcsname}%
\fi
\corollaryinner
}
{\endcorollaryinner}
\newtheorem{definition}{Definition}[section]
\newtheorem{remark}{Remark}
\ificcvfinal\pagestyle{empty}\fi
\begin{document}

\title{Why do networks have inhibitory/negative connections?}

\ifthenelse{\equal{\papertype}{neurodata}}{
\author[1,5]{Qingyang ~Wang\thanks{qwang88@jhu.edu}}
\author[2,5,7]{Michael A.~Powell}
\author[2,5]{Ali ~Geisa}
\author[4,5]{Eric ~Bridgeford}
\author[6]{Carey E. Priebe}
\author[1,2,3,4,5]{Joshua T.~Vogelstein}
\affil[1]{Department of Neuroscience, Johns Hopkins University}
\affil[2]{Department of Biomedical Engineering, Johns Hopkins University}
\affil[3]{Institute for Computational Medicine, Kavli~Neuroscience~Discovery Institute, Johns Hopkins University}
\affil[4]{Department of Biostatistics, Johns Hopkins University}
\affil[5]{Center for Imaging Science, Johns Hopkins University}
\affil[6]{Department of Applied Mathematics and Statistics, Johns Hopkins University, United States}
\affil[7]{Current location: United States Military Academy, Department of Mathematical Sciences, West Point NY US}
}{
\renewcommand*{\thefootnote}{\fnsymbol{footnote}}
\author{Qingyang ~Wang\footnotemark[1],\;Michael A.~Powell\footnotemark[4],\;Ali ~Geisa,\;Eric ~Bridgeford,\;Carey E. Priebe,\;Joshua T.~Vogelstein\\
Johns Hopkins University\\
}
}

\maketitle
\ifthenelse{\equal{\papertype}{neurodata}}{}{
\ificcvfinal\thispagestyle{empty}\fi
\footnotetext[1]{Correspondence: qwang88@jhu.edu}
\footnotetext[4]{Current address: United States Military Academy, Department of Mathematical Sciences, West Point NY US}
}
\renewcommand*{\thefootnote}{\arabic{footnote}}

\begin{abstract}
    Why do brains have inhibitory connections? 
    Why do deep networks have negative weights?
    We propose an answer from the perspective of representation capacity. 
    We believe representing functions is the primary role of both (i) the brain in natural intelligence, and (ii) deep networks in artificial intelligence. 
    Our answer to why there are inhibitory/negative weights is: to learn more functions. 
    We prove that, in the absence of negative weights, neural networks with non-decreasing activation functions are \textit{not} universal approximators. 
    While this may be an intuitive result to some, to the best of our knowledge, there is no formal theory, in either machine learning or neuroscience, that demonstrates \textit{why} negative weights are crucial in the context of representation capacity. 
    Further, we provide insights on the geometric properties of the representation space that non-negative deep networks cannot represent.
    We expect these insights will yield a deeper understanding of more sophisticated inductive priors imposed on the distribution of weights that lead to more efficient biological and machine learning.  
\end{abstract}


\section{Introduction}\label{sec:intro}
Are inhibitory connections necessary for a brain to function? Many studies on mammals answer yes: a balanced excitatory/inhibitory (E/I) ratio is essential for memory \cite{Lim2013}, unbalanced E/I lead to either impulsive or indecisive behaviors \cite{Lam2022}, such balance is closely tracked throughout learning \cite{Vogels2011, Sukenik2021}, and imbalance is hypothesized to be the driving force behind epilepsy \cite{Cohen2002, Huberfeld2011, Truccolo2011}. These simulation results and disease studies provide compelling evidence that E/I balance is necessary for brains to function stably. 

Intriguingly, when neurons first came into existence in the long history of evolution, they were exclusively excitatory \cite{Kristan2016}. The Cnidarian jellyfish has a well-defined nervous system that is made of sensory neurons, motor neurons, and interneurons. Different from mammals, their synaptic connections are exclusively excitatory. Even though these jellyfish are not equipped with inhibitory neurons, they are perfectly capable of performing context-dependent behaviors: when they feed, they swim slowly through a weak, rhythmic contraction of the whole bell; when they sense a strong mechanical stimulus, they escape through a rapid, much stronger contraction \cite{Kristan2016}. Cnidarian jellyfish behave adaptively, not through sophisticated excitatory-inhibitory circuits, but instead by modifying voltage-gated channel properties (i.e., conductance). Cnidarian jellyfish prove to us that without inhibitory connections the brain can still function, albeit likely through alternative mechanisms. This re-raises the fundamental question: are inhibitory connections necessary for brains to function?

In this paper, we explore the necessity of inhibitory connections from the perspective of representation capacity. Instead of viewing the brain as a dynamical system to discuss its functional stability, we think about the brain as a feedforward network capable of representing functions. For a certain network structure, we are interested in characterizing the repertoire of functions such networks are capable of representing. Specifically, to understand the importance of inhibitory connections in networks' representation capacity, we ask what functions can non-negative networks represent? We do so with the help of Deep Neural Networks (DNNs). DNNs allow us to work at a level of abstraction where we can stay focused on the connectivity between computation units; they also allow us to completely take out inhibitory connections by setting all weights to be non-negative. They further allow us to build upon the well-celebrated DNN theoretical result that DNNs are universal approximators \cite{Hornik1989, Cybenko1989, Hornik1991} (Theorem~\ref{lemma: hornik}). We prove in this paper that DNNs with all non-negative weights are \emph{not} universal approximators (sec~\ref{sec:app}). We further prove three geometric properties of the representation space for non-negative DNNs (sec~\ref{sec:geometry}). These results show that networks without negative connections not only lose universality; in fact, they have extremely limited representation space. We further extend our results to convolutional neural networks (CNNs) and other structural variants (sec~\ref{sec:extension}). Such theoretical results serve as a plausible explanation for why purely excitatory nervous systems, along the history of evolution, were largely overtaken by brains containing both inhibitory and excitatory connections; the simpler systems may be able to perform interesting functions, but they also have limited representation capacity. Our work thus concludes that inhibitory connections are necessary for a brain to represent more functions. 

\section{Related works}
\subsection{Optical neural network}
The optical neural network literature \cite{Dickey1993, DeLaurentis1994} has some discussion on the representation capacity of non-negative DNNs; however, the existing discussion is restricted to only non-negative functions, i.e. $f(x)\geqslant 0,\; \forall x\in \mathcal X$. Our work does not impose any requirement on the network output since we allow the bias terms to take any value and the output of the network may be negative (dependent on the activation function choice). 
\subsection{Partially monotone networks}
Partially monotone networks exploited the monotone nature of non-negative sub-networks to solve problems where certain input features are known to stay monotone. One notable application is in stock prediction~\cite{Hennie2010}. An independent stream of research that falls into the category of partially monotone networks is input-convex networks, which are built for fast inference~\cite{Amos2017}. Our work is the first one using the fact that non-negative DNNs are order-preserving monotone functions to explicitly prove that they are not universal approximators; we are also the first pointing out its geometric consequences and their implications in neuroscience, as well as the first to extend these theoretical results to CNNs. 

\section{$\text{DNN}^{+}$s are not universal approximators}\label{sec:app}
DNNs are universal approximators \cite{Hornik1989, Cybenko1989, Hornik1991} (paraphrased in Theorem ~\ref{lemma: hornik}): given a large enough network, it can learn (represent) a continuous function arbitrarily well. Throughout this paper, we are not concerned about the learning process \textit{per se}, but more about its performance upper bound. Given unlimited resources (e.g., time, energy, compute units), can networks of a certain structure learn a function at all? If it can, we say such networks can \textbf{represent} such a class of functions. We show below that non-negative DNNs (all weights non-negative, abbreviated as $\text{DNN}^{+}$) are \emph{not} universal approximators (Theorem~\ref{lma3}); thus, having both positive and negative weights is necessary for universal approximation. We start by defining the problem and then follow with our key observation: $\text{DNN}^{+}$s are composed of order-preserving monotone functions (Definition~\ref{def:e-w}, Lemma~\ref{lma1}); therefore, non-negative DNNs are all order-preserving monotone functions (Theorem~\ref{lma2}). With this key property of $\text{DNN}^{+}$ in mind, we can then show that $\text{DNN}^{+}$s cannot solve XOR and are therefore not universal approximators. 

\subsection{$\textbf{DNN}^\mathbf{{+}}$}
\begin{definition}[DNN]\label{def:DNN}
A feedforward fully-connected (FC) neural network architecture is given by the tuple $(\mathbf{\Phi}, n_0, n_1, n_2, \dots, n_L)$, where $L \in \mathbb{N}^{+}$ is the number of layers, $n_l \in\mathbb{N}$ is the number of units of layer $l$, $l \in [L]$; $\mathbf{\Phi}: \mathbb{R}^{n_{l}} \to \mathbb{R}^{n_{l}}$ is a point-wise nonlinear non-decreasing function, and it defines the non-linear component of layer $l$. We call $\mathbf{\Phi}$ the activation function. The linear component of the transformation from layer $l-1$ to $l$ is given by the weight matrix $\mathbf{W}^{(l)} \in \mathbb{R}^{n_{l} \times n_{(l-1)}}$ and bias vector $\mathbf{b}^{(l)} \in \mathbb{R}^{n_l}$. Collectively, the function given by a DNN with the above architecture tuple $(\mathbf{\Phi}, n_0, n_1, n_2, \dots, n_L)$ takes input $\mathbf{x}\in\mathbb{R}^{n_0}$ and is defined as following:
\ifthenelse{\equal{\papertype}{neurodata}}{
\begin{multline}
F:\mathbb{R}^{n_0}\to\mathbb{R}^{n_{L}}, \quad F(\mathbf{x}) = \mathbf{\Phi}(\mathbf{W}^{(L)}\mathbf{\Phi}(\mathbf{W}^{(L-1)} \dots \mathbf{\Phi}(\mathbf{W}^{(1)}\mathbf{x} + \mathbf{b}^{(1)}) \dots + \mathbf{b}^{(L-1)}) + \mathbf{b}^{(L)}), \quad \mathbf{x}\in\mathbb{R}^{n_0}.
\end{multline} 
}{
\begin{multline}
F:\mathbb{R}^{n_0}\to\mathbb{R}^{n_{L}}, \quad F(\mathbf{x}) = \mathbf{\Phi}(\mathbf{W}^{(L)}\mathbf{\Phi}(\mathbf{W}^{(L-1)} \\
\dots \mathbf{\Phi}(\mathbf{W}^{(1)}\mathbf{x} + \mathbf{b}^{(1)}) \dots + \mathbf{b}^{(L-1)}) + \mathbf{b}^{(L)}), \quad \mathbf{x}\in\mathbb{R}^{n_0}.
\end{multline}  
}
\end{definition}

Non-negative DNNs ($\text{DNN}^{+}$s) are DNNs with all weights non-negative (without any constraint on the bias terms).

\begin{definition}[$\text{DNN}^{+}$]\label{def:dnnplus}
$\text{DNN}^{+}$ is a DNN with all non-negative weights. $F^{+}:\mathbb{R}^{n_0}\to\mathbb{R}^{n_{L}}$, 
\begin{align*}
    F^{+}(\mathbf{x}) = & \mathbf{\Phi}(\mathbf{W}^{(L)} \dots \mathbf{\Phi}(\mathbf{W}^{(l)} \dots \mathbf{x} \ldots + \mathbf{b}^{(l)}) \dots + \mathbf{b}^{(L)})\\
    & \forall l\in[L], \quad \mathbf{W}^{(l)}\geq \mathbf{0}\footnotemark, \quad \mathbf{b}^{(l)}\in\mathbb{R}^{n_l}
\end{align*}
\end{definition}
\footnotetext{$\forall i\in[n_{l-1}], j\in[n_l], \;w_{ij}\geq 0$}

\begin{remark}[Activation function $\mathbf{\Phi}$]
We only impose one constraint to the activation function $\mathbf{\Phi}$ in addition to it being non-linear: it must be non-decreasing. Popular choices of activation function fall into this category (e.g., ReLU, leaky ReLU, sigmoid, tanh, etc.). 

Throughout the paper, we boldface $\mathbf{\Phi}$ to emphasize it is a point-wise function on the vector space and to distinguish it from its base form $\phi:\mathbb{R}\to\mathbb{R}$. By point-wise, we mean $\mathbf{\Phi}(\mathbf{x}) = (\phi(x_1), \phi(x_2), ..., \phi(x_{n_l}))$. 

Two examples of $\phi$ are given below:
\begin{align*}
& ReLU: \phi(x) = 
    \begin{cases}
    0 \quad x<0\\
    x \quad x\geqslant0
    \end{cases} \qquad \mathrm{sigmoid}(x) = \frac{1}{1+e^{-x}}
\end{align*}
\end{remark}



\begin{remark}[Extensions]
Definition~\ref{def:DNN} presents the most classical form of DNN as defined in the original universal approximator papers \cite{Hornik1989, Cybenko1989, Hornik1991}. Such DNNs are also called multi-layer perceptrons (MLPs). Since the 1990s, variants of the basic operations have evolved. We will discuss some of these extensions in section~\ref{sec:extension} and then prove that all of our results in sections~\ref{sec:app} and~\ref{sec:geometry} generalize to these extended DNN architectures. 
\end{remark}

\subsection{$\textbf{DNN}^\mathbf{+}$s are order-preserving monotone functions} 
We are particularly interested in a concise description of the class of functions representable by $\text{DNN}^{+}$. Intuitively, by constraining all weights to be non-negative, the possible transformation operations of the linear components of the functions are limited, which cannot be overcome by translations given by the bias terms, even though the bias terms can change within the full range of $\mathbb{R}$. This means that rotations or reflections that result in a change of orthant are impossible. One important consequence of these limitations is that the composite function becomes a non-decreasing monotone function in high-dimensional space. Below, we formally prove these ideas by first defining a partial ordering in the high-dimensional vector space (Definition~\ref{def:order}), and then by proving $\text{DNN}^{+}$s are order-preserving functions (Theorem~\ref{lma2}). 

\begin{definition}[$\preceq$, partial ordering on $\mathbb{R}^n$]\label{def:order}
For any pair of $\mathbf{x,y}\in\mathbb{R}^n$, we define a partial ordering\footnote{This is a standard order induced by the positive cone ${\mathbb{R}_+}^n$.}
\begin{equation*}
   \mathbf{x} \preceq \mathbf{y} \quad \equiv \quad  x_k \leqslant y_k,\; \forall k\in[n]  . 
\end{equation*}
\end{definition}

\begin{definition}[order-preserving monotone function, $\mathbb{R}^m\to\mathbb{R}^n$]\label{def:e-w}
For a function $T: \mathbb{R}^m\to\mathbb{R}^n$, we say $T$ is a \textbf{order-preserving monotone function} when for any $ \mathbf{x, y} \in \mathbb{R}^m$
\begin{equation*}
    \mathbf{x} \preceq \mathbf{y} \implies T(\mathbf{x}) \preceq T(\mathbf{y}). 
\end{equation*}
\end{definition}

\begin{lemmaE}[closure of order-preserving monotone functions under sum, positive scaling, and translation][end]\label{lma1}
Given an affine transformation $T: \mathbb{R}^{m} \to \mathbb{R}^n$ with a weight matrix $\mathbf{W}$ of all non-negative entries:
\begin{equation*}
    T(\mathbf{x}) = \mathbf{W}\mathbf{x} + \mathbf{b}, \; \mathbf{x} \in \mathbb{R}^m, W \in \mathbb{R}_{\geq 0}^{n \times m}, \mathbf{b} \in \mathbb{R}^n.
\end{equation*}
We observe such $T$ is a monotone function (Def~\ref{def:e-w}). That is, $\forall \mathbf{x}, \mathbf{y} \in \mathbb{R}^m$, 
\begin{equation*}
    \mathbf{x}\preceq \mathbf{y} \implies T(\mathbf{x}) \preceq T(\mathbf{y}). 
\end{equation*}
\end{lemmaE}

\begin{proofE}
    By definition~\ref{def:order}, it suffices to prove $\forall i\in[n],\; T_i(\mathbf{x}) \leqslant T_i(\mathbf{y})$. Let ${w_{ij}}$ be the $i$th row and $j$th column component of ${\mathbf{W}}$:
\begin{align*}
    T_i(\mathbf{x})-T_i(\mathbf{y}) & = ({\sum_{j\in[m]}{w_{ij}x_{j}}}+ b_i) - ({\sum_{j\in[m]}{w_{ij}y_{j}}}+ b_i)\\
    & = {\sum_{j\in[m]}{w_{ij}(x_{j}-y_{j})}}\\
    & \leqslant 0
\end{align*}
\end{proofE}

Since the activation function of $\text{DNN}^{+}$ is also a non-decreasing monotone function, we immediately have $F^+$ is also an order-preserving function because it is the composite of order-preserving functions. 

\begin{theoremE}[$\text{DNN}^{+}$ is order-preserving monotone function][end]\label{lma2}
A $\text{DNN}^{+}$ $F^{+}:\mathbb{R}^{n_0}\to\mathbb{R}^{n_{L}}$ is an order-preserving monotone function (Defn~\ref{def:e-w}). That is, any $ \mathbf{x, y} \in \mathbb{R}^{n_0}$, 
\begin{equation*}
    \mathbf{x} \preceq \mathbf{y} \implies F^{+}(\mathbf{x}) \preceq F^{+}(\mathbf{y}). 
\end{equation*}
\end{theoremE}
\begin{proofE}
$\text{DNN}^{+}$ is a composition of monotone functions: layers of non-decreasing activation functions $\mathbf{\Phi}$ and affine transformation with all non-negative weights $T:\mathbb{R}^m\to\mathbb{R}^n$ (Lemma~\ref{lma1}). By closure of monotone function under compositionality, we have that $\text{DNN}^{+}$ is an order-preserving monotone function. 
\end{proofE}

\subsection{$\textbf{DNN}^\mathbf{+}$s are not universal approximators} 
Intuitively, $\text{DNN}^{+}$ being an order-preserving monotone function means it cannot solve any classification problems where order reversal is required. We will use this idea to prove $\text{DNN}^{+}$ cannot solve XOR, and thus they are not universal approximators (Theorem~\ref{lma3}). Below we paraphrase the universal approximator theorem from Hornik's 1989 paper.

\begin{theoremE}[Multilayer feedforward networks are universal approximators (paraphrased from Hornik 1989 \cite{Hornik1991})][]\label{lemma: hornik}
Let $C(K, \mathbb{R}^n)$ denotes the set of continuous functions from a compact domain $K \subseteq \mathbb{R}^m$ to $\mathbb{R}^n$. Then for every $m\in\mathbb{N}, n\in\mathbb{N}$, for all $K \subseteq \mathbb{R}^m$, $f\in C(K, \mathbb{R}^n)$, and for all $\epsilon>0$, there exists a DNN $F$ (Definition~\ref{def:DNN}) such that
\begin{equation*}
    \sup_{\mathbf{x}\in K}|f(\mathbf{x})-F(\mathbf{x})|<\epsilon
\end{equation*}
\end{theoremE}

One may notice, in the original universal approximator proofs \cite{Hornik1989, Cybenko1989, Hornik1991}, activation functions were required to be continuous, bounded, and non-constant. Thus for DNNs with non-decreasing monotone activation functions, they are all universal approximators. However, this is no longer true when all weights are constrained to be non-negative, as we will prove below by showing a counter example: $\text{DNN}^{+}$ cannot solve XOR. 

\begin{definition}[XOR-continuous]\label{def:XOR-con}
Let $x \in [0,1]^{2}$, $f: [0,1]^2 \to \mathbb{R}$
\begin{equation}
f(x_1,x_2) = x_1+x_2-2x_1x_2
\end{equation}
Note: f(0,0)=f(1,1)=0; f(1,0)=f(0,1)=1. 
\end{definition}

\begin{remark}[XOR-discontinuous]\label{def:XOR}
We also define a discontinuous version of XOR for the readers' convenience. Such a definition variation does not alter our result (Theorem~\ref{lma3}) in any way.   

Let $x \in [0,1]^{2} \: \mathrm{and} \: t_1, t_2 \in (0,1)$, $f: [0,1]^2 \to \{0,1\}$
\begin{equation}
 f(x) =
    \begin{cases}
      0, [x_1 \geq t_1 \: \mathrm{and} \: x_2 \geq t_2] \: \mathrm{or} \: [x_1 < t_1 \: \mathrm{and} \: x_2 < t_2]\\
      1, [x_1 \geq t_1 \: \mathrm{and} \: x_2 < t_2] \: \mathrm{or} \: [x_1 < t_1 \: \mathrm{and} \: x_2 \geq t_2].\\
    \end{cases}      
\end{equation}    
\end{remark}

\begin{theorem}[$\text{DNN}^{+}$ is not a universal approximator]
\label{lma3}
$\text{DNN}^{+}$s are not universal approximators. That is, there exists a combination of $(m,n,K,f)$ where $m\in\mathbb{N}, n\in\mathbb{N}$, $K \subseteq \mathbb{R}^m$, $f\in C(K, \mathbb{R}^n)$, and exists an $\epsilon>0$ such that for all $\text{DNN}^{+}$ $F^+$ (Definition~\ref{def:dnnplus}), 
\begin{equation*}
    \exists \;\mathbf{x}\in K, \text{such that}\; |f(\mathbf{x})-F^+(\mathbf{x})|\geqslant\epsilon
\end{equation*} 
\end{theorem}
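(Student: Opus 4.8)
The plan is to instantiate the negative statement with the smallest possible witness. Take $m=2$, $n=1$, $K=[0,1]^2$, and let $f$ be the XOR-continuous function of Definition~\ref{def:XOR-con}, recalling $f(0,0)=f(1,1)=0$ and $f(0,1)=f(1,0)=1$. I will show that $\epsilon=\tfrac12$ works, i.e. that no $\text{DNN}^{+}$ can be uniformly within $\tfrac12$ of $f$ on $K$, and moreover that the obstruction already appears among the four corners of the unit square, so no calculation beyond evaluating at those points is needed.

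First I would record the relevant order relations under $\preceq$ (Definition~\ref{def:order}): $(0,0)\preceq(0,1)\preceq(1,1)$, and symmetrically $(0,0)\preceq(1,0)\preceq(1,1)$. Since $n=1$, the partial order $\preceq$ on $\mathbb{R}^{1}$ coincides with the usual total order on $\mathbb{R}$, so the images of these chains can be compared numerically. By Theorem~\ref{lma2}, every $\text{DNN}^{+}$ $F^{+}:\mathbb{R}^{2}\to\mathbb{R}$ is an order-preserving monotone function (Definition~\ref{def:e-w}), hence $F^{+}(0,0)\le F^{+}(0,1)\le F^{+}(1,1)$.

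I then argue by contradiction: suppose some $\text{DNN}^{+}$ $F^{+}$ satisfied $|f(\mathbf{x})-F^{+}(\mathbf{x})|<\tfrac12$ for every $\mathbf{x}\in K$. Evaluating at the corners gives $F^{+}(0,0)<\tfrac12$, $F^{+}(1,1)<\tfrac12$, but $F^{+}(0,1)>1-\tfrac12=\tfrac12$. This contradicts the monotonicity inequality $F^{+}(0,1)\le F^{+}(1,1)<\tfrac12$. Therefore no such $F^{+}$ exists, i.e. for \emph{every} $\text{DNN}^{+}$ there is at least one corner $\mathbf{x}\in K$ with $|f(\mathbf{x})-F^{+}(\mathbf{x})|\ge\tfrac12$, which is exactly the claim with $(m,n,K,f)=(2,1,[0,1]^{2},\text{XOR})$ and $\epsilon=\tfrac12$. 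The identical three points $(0,0),(0,1),(1,1)$ dispatch the discontinuous variant of Remark~\ref{def:XOR}: for any thresholds $t_1,t_2\in(0,1)$ one has $f(0,0)=f(1,1)=0$ and $f(0,1)=1$, and the monotonicity chain $F^{+}(0,0)\le F^{+}(0,1)\le F^{+}(1,1)$ is unchanged.

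I do not expect a genuine obstacle here; the argument is essentially the classical ``a single perceptron cannot compute XOR'' observation, lifted to arbitrary depth and width via the composition-closure of order-preserving maps already established in Theorem~\ref{lma2}. The only points that require a moment's care are (a) noting that restricting to $n=1$ makes $\preceq$ a total order, so the two inequality chains really do force a numerical contradiction, and (b) checking that $f\in C(K,\mathbb{R})$ so that this is a bona fide instance in which the conclusion of Theorem~\ref{lemma: hornik} fails for $\text{DNN}^{+}$. One could optimize the constant $\epsilon$, but $\tfrac12$ is already clean and sufficient.
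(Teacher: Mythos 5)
Your proposal is correct and follows essentially the same route as the paper's own proof: instantiate with the continuous XOR on $[0,1]^2$, invoke the order-preserving property from Theorem~\ref{lma2} on a monotone chain of three corners, and derive the contradiction with a constant of $\tfrac12$. The only differences are cosmetic — you use the chain $(0,0)\preceq(0,1)\preceq(1,1)$ where the paper uses $(0,0)\preceq(1,0)\preceq(1,1)$, and you fix $\epsilon=\tfrac12$ where the paper allows any $\epsilon<0.5$.
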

\begin{proof}
It suffices to find one combination $(m,n,K,f)$ such that \emph{no} $F^+$ could approximate $f$ arbitrarily well. XOR, $(m=2,n=1,K=[0,1]^2,f=\text{Defn~\ref{def:XOR-con}})$, is such a combination, as we will prove below in Corollary~\ref{cor:XOR}.

\begin{corollary}[lma3][$\text{DNN}^{+}$ cannot approximate XOR]\label{cor:XOR} 
For $f$ as defined in Definition~\ref{def:XOR-con}, $\exists\;\epsilon>0$ such that $\forall \; F^+$ (Definition~\ref{def:dnnplus}), $\exists \;\mathbf{x}\in \mathbb{R}^2, \text{such that}\; |f(\mathbf{x})-F^+(\mathbf{x})|\geqslant\epsilon$
\end{corollary}
\begin{proof}
We will prove by contradiction. Assume $\text{DNN}^{+}$ $F^+$ can approximate XOR $f$ well, only with an error term of $\epsilon>0$ for all $\mathbf{x}\in\mathbb{R}^2$. 

Now lets consider three points $(0,0), (1,0), (1,1)$, we must have
\begin{align}
    |F^{+}(0, 0) - f(0,0)| & < \epsilon, \\
    |F^{+}(1, 0) - f(1,0)| & < \epsilon, \\
    |F^{+}(1, 1) - f(1,1)| & < \epsilon. 
\end{align}
From XOR definition~\ref{def:XOR-con}, we also have f(0,0)=f(1,1)=0, f(1,0)=f(0,1)=1, therefore, 
\begin{align*}
    & |F^{+}(0, 0)| < \epsilon, \\
    & |F^{+}(1, 0) - 1| < \epsilon \implies  1-\epsilon < F^{+}(1, 0) < 1+\epsilon\\
    & |F^{+}(1, 1)| < \epsilon. 
\end{align*}
We can pick any $\epsilon<0.5$ and have
\begin{equation}
\begin{cases}
    F^{+}(0, 0) < F^{+}(1, 0)\\
    F^{+}(1, 1) < F^{+}(1, 0)
\end{cases}   
\end{equation}
This is obviously contradictory to the fact that $F^{+}$ is is order-preserving monotone function, that is: 
\begin{equation}\label{eq:xor-F}
    F^{+}(0,0)\leqslant F^{+}(1,0) \leqslant F^{+}(1,1)
\end{equation}
The same logic applies to $\epsilon>0.5$, and for our proof we only need to find one such $\epsilon$. Therefore, $\exists\epsilon>0$, such that for all $F^+$ we cannot approximate XOR $f$ $\epsilon$-well.  
\end{proof}

Therefore, there does not exist an $F^{+}$ that can approximate XOR $f$ arbitrarily well. Thus $F^{+}$ is not a universal approximator.
\end{proof}

From Theorem~\ref{lma3}, we have that DNNs with non-negative weights are not universal approximators – they cannot even solve XOR. This is because they are order-preserving monotone functions. Can we overcome this limitation by flipping the sign of a single weight to make the network XOR-capable? The answer is \emph{yes}, and we illustrate this result in Figure~\ref{fig:0}, panel C: in a 3-unit single hidden layer network, flipping one output edge of one hidden unit negates the first quadrant, thus sculpting it out of the pink class region and joining it with the third quadrant. 

\subsection{Implication in neuroscience} 
How does our discussion on XOR pertain to the brain? Let's think about a simple discrimination task where the animal has to make a decision based on an input stimuli that has two features $x\in \{0,1\}^2$. For example, a deer has to tell toxic leaves apart from edible leaves. Due to seasonal changes, toxic leaves could be either green straight ($(0,0)$) or red curves ($(1,1)$); edible leaves could be either green curved ($(0,1)$) or red straight ($(1,0)$). To survive, the deer must be able to discriminate between toxic and edible leaves where the decision boundary follows exactly the XOR pattern. In fact, any discrimination task taking stimuli of the form $x\in \{0,1\}^2$ where the decision is dependent on both features is an XOR task \cite{Anderson2006}. Only after inhibitory connections evolved could organisms survive in more complex environments and perform more complex tasks. 

\section{Geometric intuitions of $\text{DNN}^{+}$ in $\mathbb{R}^n$}\label{sec:geometry}
So far we have proven feed-forward, fully connected neural networks without any negative weights are not universal approximators (Theorem~\ref{lma3}) due to their order-preserving nature (Theorem~\ref{lma2}). Next we will hone in on the concept of limited representation capacity by delineating three types of classification problems that $\text{DNN}^{+}$s fail to solve (Figure~\ref{fig:0} left column). These will also provide some geometric intuition on order-preserving functions. We summarize all theoretical results in $\mathbb{R}^2$ in Figure~\ref{fig:0} for intuitive comprehension. We prove all cases in finite dimensions $\mathbb{R}^n, n\in\mathbb{Z}^+$ in the three corollaries below. 


\begin{enumerate}
    \item Corollary~\ref{cor:1} notes that $\text{DNN}^{+}$s cannot solve a classification problem that requires the decision boundaries to have segments with a positive slope ($\mathbb{R}^2$), or a section of its decision hyperplane having a normal with all positive elements ($\mathbb{R}^n$). This is illustrated in Figure~\ref{fig:0} panel A). This follows from the fact that positive weights can only form decision boundaries that have negative slopes.
    \item Corollary~\ref{cor:2} notes that $\text{DNN}^{+}$s cannot solve classification problems where there exists a class whose decision boundary forms a closed shape ($\mathbb{R}^2$), or a closed region ($\mathbb{R}^n$). Order-preserving means that for all units in the non-negative DNNs, their activation gradients point towards the positive directions of all dimensions (top and/or right in $\mathbb{R}^2$). However, for a closed-shape decision boundary, it requires the gradient to point in opposite directions (both towards and away from the partition), which is not doable with non-negative DNNs.
    \item Corollary~\ref{cor:3} notes that $\text{DNN}^{+}$s cannot solve classification problems where the partition formed by the decision boundaries results in a disconnected set for one class (path-disconnected regions in the input feature space). This is a generalized topological explanation of why $\text{DNN}^{+}$s cannot solve XOR. 
\end{enumerate}

\ifthenelse{\equal{\papertype}{neurodata}}{
\begin{figure}[H]
\begin{center}
\includegraphics[width=\linewidth]{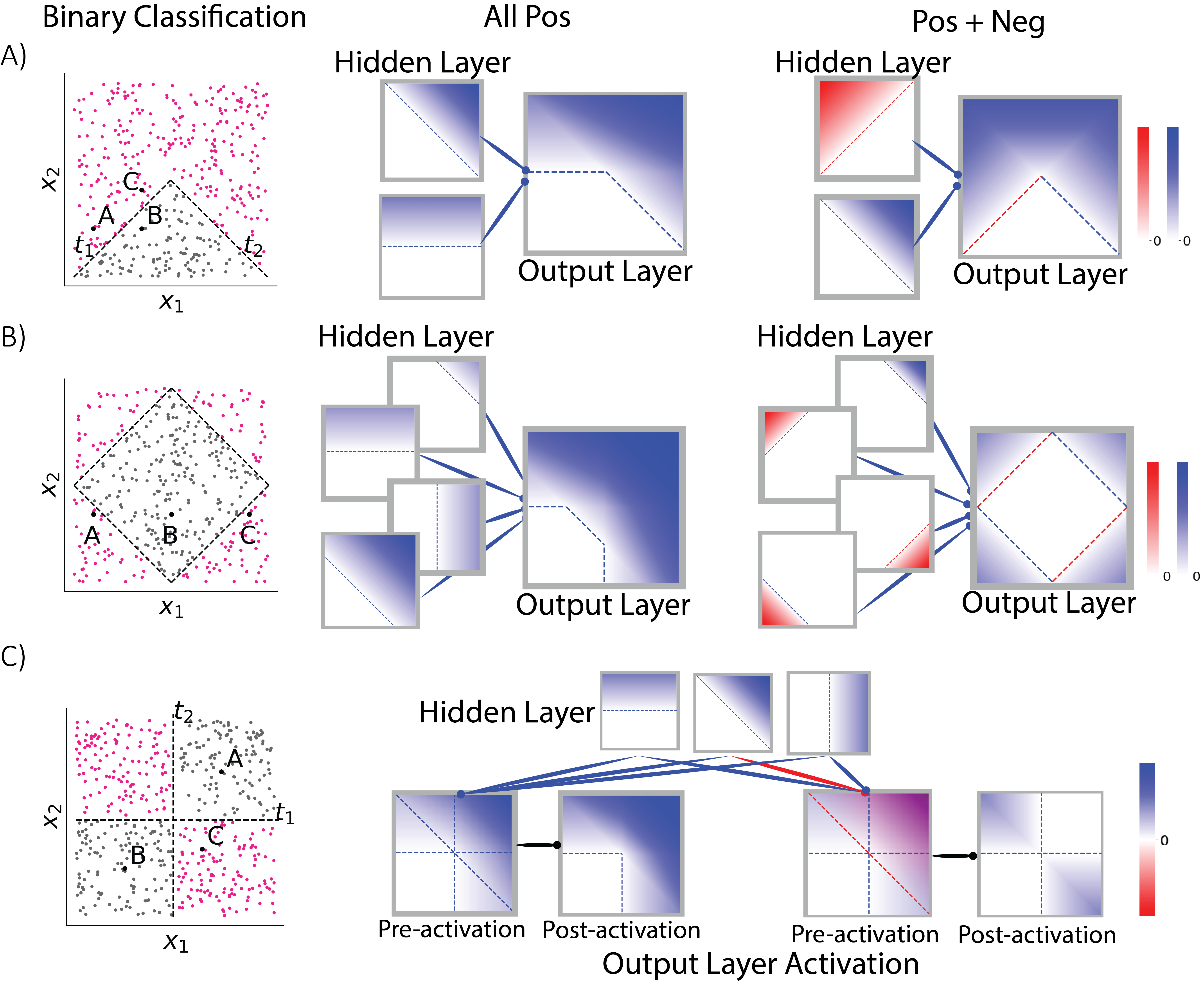}
\end{center}
}{
\begin{figure*}
\begin{center}
\includegraphics[width=.8\linewidth]{fig0}
\end{center}
}
\caption{\textbf{DNNs with only non-negative weights ($\text{DNN}^{+}$s) are NOT universal approximators.} Three problems not solvable by $\text{DNN}^{+}$ are presented here. All red colors in the plots indicate changes brought by flipping some weights' polarities to negative such that the binary classification problems on the left can be solved by the DNN. A) $\text{DNN}^{+}$s cannot solve problems where the decision boundaries contain any segment of positive slope (Corollary~\ref{cor:1}). Take the 2-hidden-unit network as an example: by flipping a single input weight to negative (notice red decision boundaries), the problem on the left becomes solvable.  B) $\text{DNN}^{+}$s cannot solve binary classification problems where there exists a decision boundary that forms a closed shape (Corollary~\ref{cor:2}). Take the 4-hidden-unit network as an example: flipping a single input weight for each of the two middle units allows decision boundaries of positive slopes to form, and further, flipping both input weights of the bottom unit allows the activation gradient to flow in the opposite direction (opposite to the order-preserving gradient, which is always to the top and/or right). These changes collectively make the closed-shape problem solvable. C) $\text{DNN}^{+}$s cannot solve binary classification problems where the partition formed by the decision boundaries results in a disconnected set for one class (Corollary~\ref{cor:3}), e.g., XOR (Theorem~\ref{lma3}). By flipping a single output weight of a single hidden unit, the top right quadrant can be sculpted out of the pink class, making the network XOR-solvable.}
\label{fig:0}
\ifthenelse{\equal{\papertype}{neurodata}}{
\end{figure}
}{
\end{figure*}
}

\subsection{A formal setup of decision boundary and partition of set}\label{sec:partition} 

In section~\ref{sec:geometry}, we focus on binary classification problems with $n$ input features. This means our target function is in the form $f:K\to \{0,1\}$, where $K$ is a compact domain and $K\subseteq\mathbb{R}^n$. The three corollaries can be easily extended into multi-class scenarios: whenever there exist two classes that satisfy the corollary statements, then the entire classification problem cannot be solved by $\text{DNN}^{+}$.

Instead of taking the angle of function approximation, in this section we set up classification problem $f:K\to \{0,1\}$ as a partition $P$ of compact domain $K$. 
\begin{definition}[Regions]
    $P$ is a partition of the compact domain $K$, where $P = \{R:R\subseteq K\subseteq\mathbb{R}^n\}$. We call the elements of $P$ regions. A single class in the classification problem can be the union of one or multiple regions. Further, for $P$ to be a partition of $K$, it satisfies these properties:
    \begin{enumerate}
        \item The family P does not contain the empty set, that is $\emptyset\notin P$;
        \item The union of the regions is $K$, that is $\cup\{R\}=K$;
        \item All regions are pair-wise disjoint, that is $\forall R_1, R_2\in P,\; R_1\neq R_2\implies R_1\cap R_2=\emptyset$;
        \item Within each region $f$ is constant, i.e., $\forall R\in P, \forall x\in R, f(x)=c$ where $c$ is a constant. For example, in our binary classification case, $c\in\{0,1\}$;
        \item Two adjacent regions belong to different classes. 
    \end{enumerate}
\end{definition}

\begin{definition}[Decision boundary]
    For each region $R$ in the partition $P=\{R:R\subseteq K\}$, the boundary of region $R$ is given by 
    \ifthenelse{\equal{\papertype}{neurodata}}{
    \begin{equation}
        \partial R = \{x\in R: \text{for every neighborhood}\; O \;\text{of}\; x,O\cap R\neq\emptyset\;\mathrm{and}\;O\cap(K\textbackslash R)\neq\emptyset\}.  
    \end{equation}    
    }{
    \begin{multline}
        \partial R = \{x\in R: \text{for every neighborhood}\; O \;\text{of}\; x,\\
        O\cap R\neq\emptyset\;\mathrm{and}\;O\cap(K\textbackslash R)\neq\emptyset\}.  
    \end{multline}    
    }
    The collection of the boundaries of all regions form the \textbf{decision boundaries} of the classification problem $f$, and is denoted by $\{\partial R\}$. 
\end{definition}

\begin{remark}[Neighborhood of the decision boundary]\label{rm:neighborhood}
Since two adjacent regions belong to different classes, then we must have points within the neighborhood of the decision boundary that belong to two different classes. 

Consider the $\epsilon$-neighborhood of a point $d$ on the decision boundary, i.e., for a point $d\in\partial R$, consider its $\epsilon$-neighborhood $O=\{x\in K:|x-d|<\epsilon, \epsilon>0\}$. Per the definition of boundary and the last property of region definition, we must have $\exists A,B\in O$, such that $f(A)\neq f(B)$. 
\end{remark}

\begin{remark}[Piecewise linear approximation of the decision boundary]\label{remark:epsilon}
Since we can approximate any function with piecewise linear functions up to some arbitrary accuracy \cite{Huang2020}, here we approximate the decision boundaries $\{\partial R\}$ with a family of linear functions $\mathbb{L}=\{L\}$, where $L=\{x\in K'\subseteq K:\mathbf{ax}+b=0, \mathbf{a}\in\mathbb{R}^n, b\in\mathbb{R}\}$. $\{L\}$ are connected line segments in $\mathbb{R}^2$ or connected hyperplane sections in higher dimensions $\mathbb{R}^n$. We always have $L\subseteq\mathbb{R}^{n-1}$, and $\mathbf{a}$ is the normal to the hyperplane $L$. Within the $\epsilon$ region of the boundary, based on the last property of the partition $\{R\}$, we have the following: 
\ifthenelse{\equal{\papertype}{neurodata}}{
\begin{equation}
    \begin{cases}
        f(x)=c_1, \forall x\in\{x\in K:0<\mathbf{ax}+b<\epsilon\} \\
        f(x)=c_2, \forall x\in\{x\in K:-\epsilon<\mathbf{ax}+b<0\} 
    \end{cases}, c_1\neq c_2, \epsilon>0 \text{ is small}\footnotemark
\end{equation}   
}{
\begin{multline}
    \begin{cases}
        f(x)=c_1, \forall x\in\{x\in K:0<\mathbf{ax}+b<\epsilon\} \\
        f(x)=c_2, \forall x\in\{x\in K:-\epsilon<\mathbf{ax}+b<0\} 
    \end{cases}, \\
    c_1\neq c_2, \epsilon>0 \text{ is small}
\end{multline}
}
\footnotetext{$\epsilon$ should be small such that $x$ is within the immediately adjacent classes $c_1\,\&\,c_2$; it should also be \emph{larger} than the linear approximation error region $|\mathcal{L}-\partial R|$. We assume in our paper the classification problems are well-behaved enough that they can be reasonably well approximated by piecewise linear functions and we can always find such $\epsilon$.}
\end{remark}

On a side note, decision boundaries are essentially the discontinuities of $f$ where the outputs of $f$ switch between $0$ and $1$. Although the original universal approximator theorem proofs require $f$ to be continuous, readers should not be too concerned on the continuity of $f$ for the following reason. We could easily relax the decision boundaries from line into a band where $f$ gradually switches between $0$ and $1$ in this band. Such relaxation does not alter any of our conclusions in the following 3 corollaries as we can always limit the bandwidth to be smaller than $\epsilon$ so that all of the above definitions still hold. 


\subsection{Geometric intuitions of $\textbf{DNN}^\mathbf{+}$}
\begin{corollaryE}[lma2][end]\label{cor:1}
\textup{(Boundary orientation, in $\mathbb{R}^n$).} 
DNNs with only non-negative weights cannot solve classification problems where the decision boundaries $\{L\}$ have any segment $L$ with a normal $\mathbf{a}=(a_1,\dots,a_n)$ where 
$
    \exists\;i\neq j\in [n], \; a_ia_j<0
$, 
i.e., positive slope in $\mathbb{R}^2$.
\end{corollaryE}
\begin{proofE}
Without loss of generality, we assume the feature index pair $i,j\in[n]$ satisfy $a_ia_j<0$. Let $\mathbf{d} = (d_1,\dots,d_i,\dots,d_j,\dots,d_n)$ be a point on the segment, i.e., $\mathbf{ad}+b=0$. Now we construct three points $A,B,C$ with  $\epsilon>0$.\footnote{For the choice of $\epsilon$ under non-linear decision boundary scenario, we assume here we can always find $\epsilon$ such that it is larger than the linear approximation error. For more details, please see Remark~\ref{remark:epsilon}} 
\begin{alignat}{3}
    A = &(d_1,\dots,\quad &&d_i-\epsilon,\dots,d_j,& &\dots,d_n)\\
    B = &(d_1,\dots,\quad &&d_i+\epsilon,\dots,d_j,& &\dots,d_n)\\
    C = &(d_1,\dots,\quad &&d_i+\epsilon,\dots,d_j-2\frac{a_i}{a_j}\epsilon,& &\dots,d_n)
\end{alignat}
First since $a_ia_j<0$ and $\epsilon>0$, we have $2\frac{a_i}{a_j}\epsilon<0$, thus 
\ifthenelse{\equal{\papertype}{neurodata}}{
\begin{equation}\label{eq:order-preserve}
    \begin{cases}
        x_i^A < x_i^B = x_i^C \\
        x_j^A = x_j^B < x_j^C
    \end{cases}\implies A\preceq B\preceq C \implies F^+(A)\leq F^+(B)\leq F^+(C)
\end{equation}
}{
\begin{multline}
    \label{eq:order-preserve}
    \begin{cases}
        x_i^A < x_i^B = x_i^C \\
        x_j^A = x_j^B < x_j^C
    \end{cases}\implies A\preceq B\preceq C \\
    \implies F^+(A)\leq F^+(B)\leq F^+(C)
\end{multline}
}

Simultaneously, from $\mathbf{ad}+b=0$, we also have
\begin{equation*}
    \begin{cases}
        \begin{alignedat}{3}
            &\mathbf{a}A+b=\mathbf{ad}+b -a_i\epsilon &&= -& &a_i\epsilon\\
            &\mathbf{a}B+b=\mathbf{ad}+b +a_i\epsilon &&= & &a_i\epsilon\\
            &\mathbf{a}C+b=\mathbf{ad}+b +a_i\epsilon-2\frac{a_i}{a_j}a_j\epsilon &&= -& &a_i\epsilon   
        \end{alignedat}
    \end{cases} 
\end{equation*}

Then $A,C$ must lie on the same side of $L$ but different than $B$, thus $f(A)=f(C)\neq f(B)$. By the same logic as in Theorem~\ref{lma3}, this contradicts with Eq~\ref{eq:order-preserve}; therefore, $\text{DNN}^{+}$ cannot solve classification problems where the decision boundaries $\{L\}$ have any segment $L$ with a normal $\mathbf{a}=(a_1,\dots,a_n)$ where $\exists\;i\neq j\in [n], a_ia_j<0$.
\end{proofE}

\begin{corollaryE}[lma2][end]\label{cor:2}
\textup{(Closed shape, in $\mathbb{R}^n$)} 
DNNs with only non-negative weights cannot solve binary classification problems where there exists a regions $R$ that is a closed set, i.e., the decision boundaries form a closed shape in $\mathbb{R}^2$. 
\end{corollaryE}
\begin{proofE}
Without loss of generality, let's assume region $R_0\in\{R\}$ is a closed set. We denote all points in $R_0$ as a general form $\mathbf{x}=(x_1,\dots,x_n)$. Consider any point $B=(x_1^B,\dots,x_n^B)\in R_0$, we can always find two points $A',C'\in\partial R$ that follow $A'\preceq B\preceq C'$ with the following construction method:
\begin{align*}
    A' = (min(x_1), x_2^B,\dots,x_n^B)\\
    C' = (max(x_1), x_2^B,\dots,x_n^B)
\end{align*}
As we move $\epsilon>0$ away from the boundary, we can further construct two points $A,C\notin R_0$ where
\begin{align*}
    A = (min(x_1)-\epsilon, x_2^B,\dots,x_n^B)\\
    C = (max(x_1)+\epsilon, x_2^B,\dots,x_n^B)
\end{align*}
Thus $A\preceq B\preceq C \implies F^+(A)\leqslant F^+(B) \leqslant F^+(C)$, yet we have $B\in R_0$ while $A,C\notin R_0 \implies f(A)=f(C)\neq f(B)$. By the same reasoning as in theorem~\ref{lma3}, we have a contradiction thus a $\text{DNN}^{+}$ cannot solve problems where the decision boundary forms a closed set.
\end{proofE}


\begin{definition}[path-disconnected point pair]
Suppose that $\mathcal X$ is a topological space. For a pair of points $x_1, x_2\in \mathcal X$, $x_1$ and $x_2$ are path-disconnected if there does \emph{not} exists a continuous function (path) $f:[0,1] \mapsto \mathcal X$ where $f(0) = x_1, f(1) = x_2$.
\end{definition}

\begin{definition}[disconnected space in $\mathbb{R}^2$]\label{def:disconnected-space}
A space $\mathcal X \subset \mathbb{R}^2$ is path-disconnected if there exists $x_1, x_2\in \mathcal X$ such that $x_1$ and $x_2$ are path-disconnected. \footnote{An example of disconnected space in $\mathbb{R}^2$ is the second and fourth quadrant that form class 0 in XOR.}
\end{definition}

\begin{corollaryE}[lma2][end]\label{cor:3}
\textup{(Disconnected space, in $\mathbb{R}^n$.)} 
DNNs with non-negative weights cannot solve a binary classification problem where there exists a class that is a disconnected space.
\end{corollaryE}
\begin{proofE}
Without loss of generality, we assume $R_0, R_1$ are disconnected and belong to the same class, i.e., $f(x)=c_1, \forall x\in R_0\cup R_1$, $c_1$ is a constant. 
Now consider any pair of points $(A,B), A\in R_0, B\in R_1$, the straight line segment $AB$ that connects $A$ and $B$ must pass through another class by Definition~\ref{def:disconnected-space}. This means we must have point $C$ on line $AB$, but $f(c)=c_2\neq c_1$ where $c_2$ is a constant. 
Next, we discuss the order relationship between $A$ and $B$:
\paragraph{case 1} \textbf{Exists such a pair} $\mathbf{A\preceq B}$. Then since $A,C,B$ are colinear and C is in between $A$ and $B$, we have $A\preceq C\preceq B\implies F^+(A)\leqslant F^+(C) \leqslant F^+(B)$. Yet by construction, we also have $f(A)=f(B)\neq f(C)$. By the same reasoning as in Theorem~\ref{lma3}, we have a contradiction. Thus, a $\text{DNN}^{+}$ cannot solve classification problems that fall into this case. 
\paragraph{case 2} \textbf{Does NOT exist such a pair} $\mathbf{A\preceq B}$. This means for all pairs of points in the two disconneted regions, they don't follow the ordering defined in Definition~\ref{def:order}. Thus, there exists two input dimensions, $i,j\in [n]$, such that for all $\mathbf{x^0}=(\dots,x_i^0,\dots,x_j^0,\dots)\in R_0$, and for all $\mathbf{x^1}=(\dots,x_i^1,\dots,x_j^1,\dots)\in R_1$, they follow
\begin{equation}
    \begin{cases}
        x_i^0 < t_i < x_i^1\\
        x_j^0 > t_j > x_j^1
    \end{cases}, t_1,t_2\in\mathbb{R}
\end{equation}
Now, for a point $G=(\dots,x_i^F,\dots,x_j^F,\dots)\in R_0$, we always have $x_i^G<t_i\; \mathrm{and}\; x_j^G>t_j$. Further, we can always construct two more points $D,E\in (K-R_0-R-1)$ by
\begin{align*}
    &D = (\dots,x_i^G,\dots,t_j,\dots)\\
    &E = (\dots,t_i,\dots,x_j^G,\dots)
\end{align*}
Thus we have $D\preceq G\preceq E\implies F^+(D)\leqslant F^+(G) \leqslant F^+(E)$. However, since we have $F\in R_0$ and yet $D,E\in (K-R_0-R-1)$, we therefore have $f(D)=f(E)\neq f(G)$. By the same reasoning as in theorem~\ref{lma3}, we have a contradiction. Thus, a $\text{DNN}^{+}$ cannot solve classification problems that fall into this case. 

Collectively, we proved that a $\text{DNN}^{+}$ cannot solve a classification problem where there exists a class that is a disconnected space.
\end{proofE}

These three corollaries point to the fundamental flaw of $\text{DNN}^{+}$s: however many layers they have, each penultimate layer unit can only form a single continuous decision boundary that is composed of segments having negative slopes (or composed of hyperplanes having all-positive normal). Adding more layers or adding more nodes to each layer of such a network can produce more complex-shaped decision boundaries (Figure~\ref{fig:0}, panel B, middle column), but cannot form boundaries of more orientations, or form a closed region, or sculpt the input space such that disconnected regions can be joined. Therefore, taking negative weights away from DNNs drastically shrinks their repertoire of representable functions. For real-world problem solving, it is crucial to have both positive and negative weights in the network.

\section{Extension to convolutional neural networks}\label{sec:extension}
In this section, we will discuss and prove that our theoretical results in section~\ref{sec:app} and~\ref{sec:geometry} are generalizable to many forms of DNN that are variants of the MLP definition (Def~\ref{def:DNN}). We are particularly interested in the family of convolutional neural networks (CNNs) since their activity space closely resembles those observed in the visual cortex and auditory cortex in the brain, as shown by various correlation studies~\cite{Bashivan2019, Alexander2018}. 

Essentially, substituting matrix multiplication with \textbf{convolution} (Definition~\ref{def:conv}) and adding additional \textbf{max-pooling} layers (Definition~\ref{def:pool}) makes a CNN \cite{Lecun1989, Lecun1998, Alex2012}. The universality of CNNs has been explicitly proven~\cite{ZHOU2020, Cohen2016, Poggio2017, Heinecke2020}. Additionally, one can further add \textbf{skip connections} (Definition~\ref{def:skip}) to make a residual network~\cite{He2016}. 

We will discuss how having convolution layers and adding skip connections make DNNs equivalent to our original MLP definition and thus subject to the same limited representation power when all weights are non-negative; we will also show our results are generalizable to DNNs with additional max-pooling layers since the order-preserving property of $\text{DNN}^{+}$ holds in general. 

\subsection{Convolution layer} 
The convolution operation between an input (normally an image) and a single filter is defined below. For the convolutional layers in CNN, instead of matrix multiplication, multiple filters convolve the input, and the values in the filters are the weights in CNN. It can be shown that all convolution operations can be converted into matrix multiplications~\cite{chellapilla2006}, where all the filters are converted into the form of Toeplitz type matrices and all images are vectorized. For more details, we refer readers to the universality proofs of CNN~\cite{ZHOU2020}. Therfore, a convolutional neural network can be converted into a MLP (Definition~\ref{def:DNN}). Therefore, for a convolutional neural network with all non-negative weights, all of our results in this paper hold. 
\begin{definition}[Convolution 2D]\label{def:conv}
    The convolution $O:\mathbb{R}^{M\times N}\times\mathbb{R}^{m\times n} \to \mathbb{R}^{(M-m+1)\times (N-n+1)}$ between an input (image) $I\in\mathbb{R}^{M\times N}$ and a kernel (feature map) $K\in\mathbb{R}^{m\times n}$ is given by
    \ifthenelse{\equal{\papertype}{neurodata}}{
    \begin{equation}
        O(i,j) = \sum^{m}_{k=1}\sum^{n}_{l=1}I(i+k-1, j+l-1)K(k,l), i\in [M-m+1],\; j\in [N-n+1]    
    \end{equation}
    }{
    \begin{multline}
        O(i,j) = \sum^{m}_{k=1}\sum^{n}_{l=1}I(i+k-1, j+l-1)K(k,l), \\
        i\in [M-m+1],\; j\in [N-n+1]      
    \end{multline}   
    }
\end{definition}

\subsection{Max pooling layer} 
Since max pooling functions are order-preserving monotone functions as well, by the closure of monotone functions under compositionality, we further have DNNs with max pooling layers follow all the results in the previous sections. 
\begin{definition}[Max-pooling]\label{def:pool}
    Max-pooling functions are $pool:\mathbb{R}^n\to\mathbb{R}, \;pool(\mathbf{x})=max(x_i)$, where $\mathbf{x}=(x_1,\dots,x_i,\dots,x_n)$.
\end{definition}
\begin{lemma}[Max pooling are order preserving]\label{lma:pooling}
    Max-pooling functions are order-preserving monotonone functions. Equivalently,
    \begin{equation}
        \mathbf{x}\preceq \mathbf{y} \implies pool(\mathbf{x})\leqslant pool(\mathbf{y})
    \end{equation}
\end{lemma}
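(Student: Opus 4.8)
The plan is to prove this directly from the definition of $pool$ as a coordinate-wise maximum, using only the componentwise inequality that $\preceq$ encodes. First I would fix $\mathbf{x},\mathbf{y}\in\mathbb{R}^n$ with $\mathbf{x}\preceq\mathbf{y}$, i.e. $x_k\leqslant y_k$ for every $k\in[n]$ (Definition~\ref{def:order}), and pick an index $i^\ast\in[n]$ attaining the maximum among the coordinates of $\mathbf{x}$, so that $pool(\mathbf{x})=x_{i^\ast}$; such an index exists because $[n]$ is finite.

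The core of the argument is then the two-step chain $pool(\mathbf{x})=x_{i^\ast}\leqslant y_{i^\ast}\leqslant max(y_k)=pool(\mathbf{y})$, where the first inequality is the hypothesis $\mathbf{x}\preceq\mathbf{y}$ evaluated at the single coordinate $i^\ast$, and the second is immediate from the definition of the maximum over all coordinates of $\mathbf{y}$. Since $pool$ maps into $\mathbb{R}$, the partial order $\preceq$ on the codomain is just ordinary $\leqslant$, so this chain is exactly the conclusion $pool(\mathbf{x})\preceq pool(\mathbf{y})$ required by Definition~\ref{def:e-w}.

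I do not expect any real obstacle here; the only point needing mild care is that a max-pooling \emph{layer} in a CNN outputs a vector (one entry per pooling window), not a scalar, whereas Definition~\ref{def:pool} is stated for a single window. To bridge this I would note that the scalar argument above applies verbatim to the maximum over any fixed subset of coordinates, so the layer map $\mathbb{R}^n\to\mathbb{R}^{n'}$ is order-preserving componentwise. Combined with Lemma~\ref{lma1}, Theorem~\ref{lma2}, and closure of order-preserving maps under composition, this is precisely what lets the non-universality and geometric results of Sections~\ref{sec:app} and~\ref{sec:geometry} extend to CNNs containing pooling layers.
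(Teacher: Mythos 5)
Your proof is correct and follows essentially the same argument as the paper: pick the coordinate attaining the maximum of $\mathbf{x}$, use the componentwise hypothesis at that single coordinate, then bound by the maximum of $\mathbf{y}$ (the paper writes this as $x_p\leqslant y_p\leqslant y_q$). Your added remark about applying the scalar argument componentwise to each pooling window is a harmless clarification consistent with how the paper composes pooling into the network.
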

\begin{proof}
Without loss of generality, for any pair of points $\mathbf{x}=(x_1,\dots,x_i,\dots,x_n), \mathbf{y}=(y_1,\dots,y_i,\dots,y_n) \in \mathbb{R}^n$ that follow the order $\mathbf{x}\preceq \mathbf{y}$, assume their max pooling output is given by the p\textsuperscript{th} and q\textsuperscript{th} dimensions, respectively.  Then we have
\begin{align*}
    &\forall i\in [n], x_i\leqslant x_p\\
    &\forall j\in [n], y_j\leqslant y_q.
\end{align*}
Also by the ordering $\mathbf{x}\preceq \mathbf{y}$, we have $y_p\geqslant x_p$, which means 
\begin{equation}
    pool(\mathbf{x})=x_p\leqslant y_p\leqslant y_q=pool(\mathbf{y}).
\end{equation}
Thus max pooling functions are order-preserving monotone functions. 
\end{proof}

\subsection{Skip connections} 
A skip connection connects two non-adjacent layers, e.g., layers $(l-2)$ and $l$, through identity mapping (Definition~\ref{def:skip}). 
\begin{definition}[Skip connections]\label{def:skip}
    Each layer with an incoming skip connection is given by $f^{(l)}: \mathbb{R}^{n_{(l-1)}}\to\mathbb{R}^{n_{l}}$: 
    \begin{equation*}
        f^{(l)}(\mathbf{x})=\mathbf{\Phi}(\mathbf{W}^{(l)}\mathbf{x}+\mathbf{b}^{(l)})\underline{+f^{(l')}(\mathbf{x})}, \quad l'\in\{0,\dots,l-2\}
    \end{equation*}
\end{definition}
We can always convert a DNN with skip connections into a classical MLP (Definition~\ref{def:DNN}) by adding dummy units in the intermediate layers (in $(l-1)$ and $l$) and setting their incoming and outgoing weights to match the identity mapping of the skip connection, i.e., weights of each unit only having a single $1$ entry where the skip connection is and $0$ elsewhere. Therefore, for a DNN with skip connections, all of our results in this paper hold. 

\section{Conclusion and Discussion}\label{sec:discussion}
We proved that DNNs with all non-negative weights (i.e., without inhibitory connections) are not universal approximators (Theorem~\ref{lma2}). This is because non-negative DNNs are exclusively order-preserving monotone functions (Theorem~\ref{lma1}). Some geometric implications of this property in the finite euclidean space $\mathbb{R}^n$ are proved in Corollaries~\ref{cor:1}-\ref{cor:3}. Specifically, each output unit in a network without inhibitory connections can only form a single continuous decision boundary that is composed of hyperplane sections having a very particular orientation (hyperplane with all-positive normal). Intuitively in $\mathbb{R}^2$ input space, this means only forming a continuous line composed of segments having negative slopes. The addition of inhibitory connections to the networks allows more complex boundaries to form (e.g., boundaries of positive orientations (Corollary~\ref{cor:1}) and of closed shapes (Corollary~\ref{cor:2})); the addition of inhibition also allows for sculpting/folding of the representation space (Corollary~\ref{cor:3}). Together, these results prove that both DNNs and brains, which can be abstracted as networks with non-decreasing monotone activation functions,  need inhibitory connections to learn more functions. 

How translatable are our theoretical results on DNNs to brains? Under the assumption that the activation functions  of all neurons are non-decreasing, our theoretical results directly shed light onto the long-standing question of why brains have inhibitory connections. We recognize there are types of questions that cannot be answered by DNN theory. For example, the learning process of DNNs differs from biological systems \cite{Lillicrap2016},  DNNs do not follow Dale's principle \cite{Cornford2020}, units in DNNs do not spike, etc. We emphasize our proof does 
\emph{not} rely on any assumption that involves the above discrepancies between DNNs and brains; instead, our proof only relies on the non-decreasing activation function assumption. The simplicity of our assumption is the sole reason behind why our results in general hold for many forms of DNN that have been experimentally shown to resemble the brain~\ref{sec:extension}. Our answer from a representation-capacity point of view supplements the dynamic system story of E/I balance and provides new perspectives to these long-standing neuroscience questions.

Our current work is just a first step in understanding the representation space of networks from the lens of connection polarity. What we have proven is a first-order property of the network concerning the \emph{existence} of negative connections. The next step is to look at the second-order property that concerns the \emph{configuration} of connection polarities. For this second-order property, a strict non-negativity constraint is no longer imposed on the network weights; instead, we constrain the connections to follow a specific polarity configuration, or in the neuroscience language - a circuit rule. We will illustrate the feasibility of this idea and its significance with an example: in the cortex, excitatory and inhibitory neurons connect / synapse in very different manners, and they follow a very stereotypical pattern: \textit{local excitation, broad inhibition}. Such a configuration principle has been suggested to be the underlying mechanism of surround inhibition ~\cite{Angelucci2017, Ozeki2009}, an important computation process that allows for context-dependent activation and redundancy reduction. Based on our Corollaries~\ref{cor:1}-\ref{cor:3}, it is very likely that in a local subspace, the largely excitatory sub-network is order-preserving monotone and only forms continuous decision boundaries of negative slopes; on a global level, such communities of subspaces are connected through inhibitory connections and collectively form complex decision boundaries. It is an exciting future direction to connect these geometric properties with the relatively better-understood functional significance of neural circuits. Similar ideas can be explored in the canonical circuits in decision making \cite{Mysore2020} (e.g., winner take all). How circuit rules translate into the geometric constraints on the representation space has been largely unexplored. It was a route more regularly taken in the earliest era of DNNs (perceptrons, specifically) ~\cite{Minsky1969} and became increasingly more challenging as the size of the network grew; we look forward to building on top of the theoretical work presented in this paper and bridging the gap between network topology (E-I configuration) and the associated representation space. 

\medskip
\section*{Acknowledgements}
This work is supported by grants awarded to J.T.V:  \href{https://www.nsf.gov/awardsearch/showAward?AWD_ID=1942963&HistoricalAwards=false}{NSF CAREER Award (Grant No. 1942963)},  \href{https://www.nsf.gov/awardsearch/showAward?AWD_ID=2014862}{NSF NeuroNex Award (Grant No. 2014862)}, \href{https://www.nsf.gov/awardsearch/showAward?AWD_ID=2020312&HistoricalAwards=false}{NSF AI Institute Planning Award (Grant No. 2020312)}, and \href{https://www.nsf.gov/awardsearch/showAward?AWD_ID=2031985&HistoricalAwards=false}{NSF Collaborative Research: THEORI Networks (Grant No. 2031985)}. 
Q.W. was also partially supported by \href{https://reporter.nih.gov/search/uO0lXp3vF0iu1clRXTJOQg/project-details/10203331}{Johns Hopkins neuroscience T32 training grant (Grant no. 091018)}. 
The authors would like to thank the \href{https://neurodata.io/about/team/}{NeuroData} lab members for helpful feedback.

\ifthenelse{\equal{\papertype}{neurodata}}{
\bibliographystyle{unsrt}
\bibliography{ref}
}{
{\small
\bibliographystyle{ieee_fullname}
\bibliography{ref}

\begin{thebibliography}{10}

\bibitem{Lim2013}
Sukbin Lim and Mark~S. Goldman.
\newblock {Balanced cortical microcircuitry for maintaining information in
  working memory}.
\newblock {\em Nature Neuroscience}, 16(9):1306--1314, 2013.

\bibitem{Lam2022}
Norman~H. Lam, Thiago Borduqui, Jaime Hallak, Antonio Roque, Alan Anticevic,
  John~H. Krystal, Xiao~Jing Wang, and John~D. Murray.
\newblock Effects of altered excitation-inhibition balance on decision making
  in a cortical circuit model.
\newblock {\em Journal of Neuroscience}, 42, 2022.

\bibitem{Vogels2011}
T.~P. Vogels, H.~Sprekeler, F.~Zenke, C.~Clopath, and W.~Gerstner.
\newblock {Inhibitory plasticity balances excitation and inhibition in sensory
  pathways and memory networks}.
\newblock {\em Science}, 334(6062):1569--1573, dec 2011.

\bibitem{Sukenik2021}
Nirit Sukenik, Oleg Vinogradov, Eyal Weinreb, Menahem Segal, Anna Levina, and
  Elisha Moses.
\newblock Neuronal circuits overcome imbalance in excitation and inhibition by
  adjusting connection numbers.
\newblock {\em Proceedings of the National Academy of Sciences of the United
  States of America}, 118, 2021.

\bibitem{Cohen2002}
Ivan Cohen, Vincent Navarro, St{\'{e}}phane Clemenceau, Michel Baulac, and
  Richard Miles.
\newblock {On the origin of interictal activity in human temporal lobe epilepsy
  in vitro}.
\newblock {\em Science}, 298(5597):1418--1421, nov 2002.

\bibitem{Huberfeld2011}
Gilles Huberfeld, Liset {Menendez De La Prida}, Johan Pallud, Ivan Cohen,
  Michel {Le Van Quyen}, Claude Adam, St{\'{e}}phane Clemenceau, Michel Baulac,
  and Richard Miles.
\newblock {Glutamatergic pre-ictal discharges emerge at the transition to
  seizure in human epilepsy}.
\newblock {\em Nature Neuroscience}, 14(5):627--635, 2011.

\bibitem{Truccolo2011}
Wilson Truccolo, Jacob~A. Donoghue, Leigh~R. Hochberg, Emad~N. Eskandar,
  Joseph~R. Madsen, William~S. Anderson, Emery~N. Brown, Eric Halgren, and
  Sydney~S. Cash.
\newblock {Single-neuron dynamics in human focal epilepsy}.
\newblock {\em Nature Neuroscience}, 14(5):635--643, 2011.

\bibitem{Kristan2016}
William~B. Kristan.
\newblock Early evolution of neurons.
\newblock {\em Current Biology}, 26, 2016.

\bibitem{Hornik1989}
Kurt Hornik, Maxwell Stinchcombe, and Halbert White.
\newblock {Multilayer feedforward networks are universal approximators}.
\newblock {\em Neural Networks}, 2(5):359--366, jan 1989.

\bibitem{Cybenko1989}
G.~Cybenko.
\newblock {Approximation by superpositions of a sigmoidal function}.
\newblock {\em Mathematics of Control, Signals and Systems 1989 2:4},
  2(4):303--314, dec 1989.

\bibitem{Hornik1991}
Kurt Hornik.
\newblock {Approximation capabilities of multilayer feedforward networks}.
\newblock {\em Neural Networks}, 4(2):251--257, jan 1991.

\bibitem{Dickey1993}
F.~M. Dickey and J.~M. DeLaurentis.
\newblock Optical neural networks with unipolar weights.
\newblock {\em Optics Communications}, 101, 1993.

\bibitem{DeLaurentis1994}
J.~M. DeLaurentis and F.~M. Dickey.
\newblock A convexity-based analysis of neural networks.
\newblock {\em Neural Networks}, 7, 1994.

\bibitem{Hennie2010}
Hennie Daniels and Marina Velikova.
\newblock Monotone and partially monotone neural networks.
\newblock {\em IEEE Transactions on Neural Networks}, 21(6):906--917, 2010.

\bibitem{Amos2017}
Brandon Amos, Lei Xu, and J.~Zico Kolter.
\newblock Input convex neural networks.
\newblock In Doina Precup and Yee~Whye Teh, editors, {\em Proceedings of the
  34th International Conference on Machine Learning}, volume~70 of {\em
  Proceedings of Machine Learning Research}, pages 146--155. PMLR, 06--11 Aug
  2017.

\bibitem{Anderson2006}
Britt Anderson, Jessie~J. Peissig, Jedediah Singer, and David~L. Sheinberg.
\newblock Xor style tasks for testing visual object processing in monkeys.
\newblock {\em Vision Research}, 46, 2006.

\bibitem{Huang2020}
Changcun Huang.
\newblock {ReLU Networks Are Universal Approximators via Piecewise Linear or
  Constant Functions}.
\newblock {\em Neural Computation}, 32(11):2249--2278, 11 2020.

\bibitem{Bashivan2019}
Pouya Bashivan, Kohitij Kar, and James~J. DiCarlo.
\newblock Neural population control via deep image synthesis.
\newblock {\em Science}, 364, 2019.

\bibitem{Alexander2018}
Alexander~J.E. Kell, Daniel~L.K. Yamins, Erica~N. Shook, Sam~V.
  Norman-Haignere, and Josh~H. McDermott.
\newblock A task-optimized neural network replicates human auditory behavior,
  predicts brain responses, and reveals a cortical processing hierarchy.
\newblock {\em Neuron}, 98:630--644.e16, 2018.

\bibitem{Lecun1989}
Y.~LeCun, B.~Boser, J.~S. Denker, D.~Henderson, R.~E. Howard, W.~Hubbard, and
  L.~D. Jackel.
\newblock {Backpropagation Applied to Handwritten Zip Code Recognition}.
\newblock {\em Neural Computation}, 1(4):541--551, 12 1989.

\bibitem{Lecun1998}
Y.~Lecun, L.~Bottou, Y.~Bengio, and P.~Haffner.
\newblock Gradient-based learning applied to document recognition.
\newblock {\em Proceedings of the IEEE}, 86(11):2278--2324, 1998.

\bibitem{Alex2012}
Alex Krizhevsky, Ilya Sutskever, and Geoffrey~E Hinton.
\newblock Imagenet classification with deep convolutional neural networks.
\newblock In F.~Pereira, C.J. Burges, L.~Bottou, and K.Q. Weinberger, editors,
  {\em Advances in Neural Information Processing Systems}, volume~25. Curran
  Associates, Inc., 2012.

\bibitem{ZHOU2020}
Ding-Xuan Zhou.
\newblock Universality of deep convolutional neural networks.
\newblock {\em Applied and Computational Harmonic Analysis}, 48(2):787--794,
  2020.

\bibitem{Cohen2016}
Nadav Cohen and Amnon Shashua.
\newblock Convolutional rectifier networks as generalized tensor
  decompositions.
\newblock In Maria~Florina Balcan and Kilian~Q. Weinberger, editors, {\em
  Proceedings of The 33rd International Conference on Machine Learning},
  volume~48 of {\em Proceedings of Machine Learning Research}, pages 955--963,
  New York, New York, USA, 20--22 Jun 2016. PMLR.

\bibitem{Poggio2017}
Tomaso Poggio, Hrushikesh Mhaskar, Lorenzo Rosasco, Brando Miranda, and Qianli
  Liao.
\newblock Why and when can deep-but not shallow-networks avoid the curse of
  dimensionality: A review.
\newblock {\em International Journal of Automation and Computing}, 14, 2017.

\bibitem{Heinecke2020}
Andreas Heinecke, Jinn Ho, and Wen-Liang Hwang.
\newblock Refinement and universal approximation via sparsely connected relu
  convolution nets.
\newblock {\em IEEE Signal Processing Letters}, 27:1175--1179, 2020.

\bibitem{He2016}
Kaiming He, Xiangyu Zhang, Shaoqing Ren, and Jian Sun.
\newblock Deep residual learning for image recognition.
\newblock In {\em 2016 IEEE Conference on Computer Vision and Pattern
  Recognition (CVPR)}, pages 770--778, 2016.

\bibitem{chellapilla2006}
Kumar Chellapilla, Sidd Puri, and Patrice Simard.
\newblock {High Performance Convolutional Neural Networks for Document
  Processing}.
\newblock In Guy Lorette, editor, {\em {Tenth International Workshop on
  Frontiers in Handwriting Recognition}}, La Baule (France), October 2006.
  {Universit{\'e} de Rennes 1}, {Suvisoft}.
\newblock http://www.suvisoft.com.

\bibitem{Lillicrap2016}
Timothy~P. Lillicrap, Daniel Cownden, Douglas~B. Tweed, and Colin~J. Akerman.
\newblock Random synaptic feedback weights support error backpropagation for
  deep learning.
\newblock {\em Nature Communications}, 7:1--10, 2016.

\bibitem{Cornford2020}
Jonathan Cornford, Damjan Kalajdzievski, Marco Leite, Amélie Lamarquette,
  Dimitri~M Kullmann, Blake Richards, and Mila~/ Mcgill.
\newblock Learning to live with dale's principle: Anns with separate excitatory
  and inhibitory units.
\newblock {\em bioRxiv}, 2020.

\bibitem{Angelucci2017}
Alessandra Angelucci, Maryam Bijanzadeh, Lauri Nurminen, Frederick Federer, Sam
  Merlin, and Paul~C. Bressloff.
\newblock Circuits and mechanisms for surround modulation in visual cortex.
\newblock {\em Annual Review of Neuroscience}, 40:425--451, 2017.

\bibitem{Ozeki2009}
Hirofumi Ozeki, Ian~M. Finn, Evan~S. Schaffer, Kenneth~D. Miller, and David
  Ferster.
\newblock Inhibitory stabilization of the cortical network underlies visual
  surround suppression.
\newblock {\em Neuron}, 62:578--592, 2009.

\bibitem{Mysore2020}
Shreesh~P. Mysore and Ninad~B. Kothari.
\newblock Mechanisms of competitive selection: A canonical neural circuit
  framework.
\newblock {\em eLife}, 9, 2020.

\bibitem{Minsky1969}
Marvin Minsky and Seymour Papert.
\newblock {\em Perceptrons.}
\newblock M.I.T. Press, 1969.

\end{thebibliography}
}
}

\appendix
\newpage
\section{Appendix - Proofs}\label{sec:proof}
\printProofs
\end{document}